\DeclareMathOperator{\vc}{VCdim}    
\DeclareMathOperator{\kl}{KL}       
\DeclareMathOperator{\tv}{TV}       
\DeclareMathOperator{\sgn}{sgn}     
\newcommand{\bydef}{:=}
\renewcommand{\l}{\ell}            
\renewcommand{\u}{u}               
\newcommand{\X}{\mathcal{X}}     
\newcommand{\Y}{\mathcal{Y}}     
\newcommand{\R}{\mathbb{R}}      
\newcommand{\N}{\mathbb{N}}      
\renewcommand{\H}{\mathcal{H}}     
\newcommand{\Pn}{\mathcal{P}}    
\newcommand{\oo}[1]{\text{\sf 1}_{[#1]}} 
\newcommand{\ee}[2]{\mathbb{E}_{#1}\left[#2 \right]}
\newcommand{\rate}{\asymp}
\newcommand{\alg}{\mathcal{A}}
\newcommand{\set}[1]{\{#1\}}
\newcommand{\mixparam}{\frac{1}{2}}
\newtheorem{example}{Example}
\newtheorem{theorem}{Theorem}
\newtheorem{proposition}{Proposition}
\newtheorem{lemma}{Lemma}
\newtheorem{definition}{Definition}
\title{When can unlabeled data improve the learning rate?}
\author[1]{Christina Göpfert}
\author[2]{Shai Ben-David}
\author[3]{Olivier Bousquet}
\author[3]{Sylvain Gelly}
\author[3]{Ilya Tolstikhin}
\author[4]{Ruth Urner}
\affil[1]{Bielefeld University\thanks{Part of this work was completed while the first author was at Google. In addition, funding by the CITEC cluster of excellence and by the BMBF under grant number 01IS18041A is gratefully acknowledged. }}
\affil[2]{University of Waterloo}
\affil[3]{Google Brain}
\affil[4]{York University}
\date{}
\begin{document}

\maketitle

\begin{abstract}%
In semi-supervised classification, one is given access both to labeled and unlabeled data. As unlabeled data is typically cheaper to acquire than labeled data, this setup becomes advantageous as soon as one can exploit the unlabeled data in order to produce a better classifier than with labeled data alone. However, the conditions under which such an improvement is possible are not fully understood yet. Our analysis focuses on improvements in the {\em minimax} learning rate in terms of the number of labeled examples (with the number of unlabeled examples being allowed to depend on the number of labeled ones).
We argue that for such improvements to be realistic and indisputable, certain specific conditions should be satisfied and previous analyses have failed to meet those conditions. We then demonstrate examples where these conditions can be met, in particular showing rate changes from $1/\sqrt{\l}$ to $e^{-c\l}$ and from $1/\sqrt{\l}$ to $1/\l$. These results improve our understanding of what is and isn't possible in semi-supervised learning.
\end{abstract}

\section{Introduction}
The aim of this paper is to study and clarify different frameworks for analysis of semi-supervised learning (SSL) and formal demonstrations of benefits from unlabeled data.
We outline various such frameworks that have been worked with in previous studies and map results from the literature on the theory of SSL to these frameworks.
We hereby highlight how rather subtle differences in setup can lead to opposing conclusions about the formal benefits of unlabeled data.
As a result, we propose and argue for a clean minimax criterion for identification of learning rate changes between supervised learning (SL) and SSL.
That is, classes of data-generating distributions where the minimax expected excess risk of some SSL algorithm converges at a rate faster than that of \emph{any} SL algorithm as the number of labeled examples grows.
Finally, we demonstrate examples of such a strict rate change.

Demonstrating advantages of SSL over supervised learning, requires upper bounding the excess risk of an SSL learner while also providing a lower bound on the risk of any supervised learner under the same distributional assumptions.
We find that studies which provide both SSL upper and SL lower bounds with the gap in-between follow one of \emph{three common patterns} in their analysis which we argue are all restrictive in one way or another.  

The first common approach to showing such improvements is to grant the SSL algorithm access to the true marginal distribution (\cite{castelli1995exponential,niyogi2013manifold,globerson2017effective}). 
We call this approach \emph{improvements via idealistic SSL}. 
Several papers demonstrate problems which are not learnable by any SL algorithm, while exact knowledge of the marginal leads to finite rates. 
Such demonstrations are typically motivated with an argument that access to sufficient amounts of unlabeled data should yield a close enough estimate of the marginal for these phenomena to also occur in a finite data regime.
However, we show in \cref{lem:unlearnable} that such a transition from an unlearnable problem to an excess error guarantee is impossible if an SSL algorithm only sees finitely many unlabelled points (rather than the marginal).
In this work we thus argue for practical settings where the extra information in SSL  is limited to a \emph{finite sample} of unlabelled points and where the amount of  unlabeled data is determined by a \emph{fixed function} of the size of the labeled dataset.

The second, and more involved way to demonstrate the desired behavior is to allow the class of distributions to depend on the number of labeled and unlabeled examples $(\l, u)$ that the SSL algorithm receives. 
This will be referred to as \emph{improvements via sample size dependent classes}. 
\citet[Theorem 20, Lemma 28]{darnstaedt2015investigation} provides a binary classification problem over a discrete domain which is not learnable for any SL algorithm but can be successfully learned by SSL if the support of the unknown marginal happens to be bounded by some function of $u$. Similarly, \cite{singh2009unlabeled} consider regression problems in $\mathbb{R}^d$ under a cluster assumption where the target function is smooth over the \emph{decision sets} and show\footnote{See 3rd and 5th rows in Table on page 6 of \cite{singh2009unlabeled}}
that SSL may achieve rates faster than SL if the minimal \emph{margin} between those sets is larger than roughly $u^{-1/d}$, but smaller than $\l^{-1/d}$.
While these settings are somewhat more practical, they involve learning against a class of distributions that grows with $\l$ and $u$.
That is, the set of distributions on which SL and SSL are compared is adapted precisely to the tasks which the SSL algorithm can solve with the data available, while the SL algorithm cannot.
We instead argue for comparing SL and SSL on a fixed set of tasks.

Both approaches sketched above used knowledge of certain parameters of the marginal distribution for recovering the target function. 
These \emph{relevant parameters} (manifold/discrete support, decision sets) are either given to the SSL algorithm or, in the second scenario, are estimated from unlabeled data, but then only classes of distributions for which the amount of unlabeled data was sufficient, are considered in the comparison.
A third framework of comparison that has been used in theoretical analysis of SSL fixes the class of distributions by putting strong (but independent of $u$) assumptions  on the class of marginals. 
For instance, constant upper bounds on the size of support (which allows to quickly find all the domain points where the marginal puts a lot of mass) or constant lower bounds on the margin (which allows to quickly discern the decision sets). 
We call this approach \emph{improvements via easy marginal estimation}. 
In such a setup, the overall error naturally decomposes into a part from the marginal estimation and a part from label prediction problem.
If the unsupervised (marginal) estimation turns out to be statistically harder than learning the target labeling function with known parameters of the marginal, SSL can use its larger unlabeled budget to eliminate the loss incurred by the former (harder) part of the problem and beat the SL. \citet[last row in Table on page 6]{singh2009unlabeled} demonstrates such a case.

The three approaches discussed above have one important characteristic in common: SSL is considered on a parameterized set of distributions and it becomes possible for the SSL algorithm to estimate the relevant parameters of the unknown marginal to any given precision. 
However, parameter estimation is generally not a requirement for strong predictive performance.
It is interesting to know whether one can demonstrate improvements of SSL without going through an intermediate step of parameter estimation. 
In this paper we give a positive answer to that question, demonstrating an example where estimation of the relevant parameters is impossible and yet the SSL algorithm with access to $u$ unlabeled points achieves rates faster than SL.

The paper is structured as follows. 
In \cref{sec:definitions} we give a formal definition of ``unlabeled data helps'', contrast it to previous (implicit and explicit) definitions and take a closer look at what kind of improvements have been shown in the literature.
In \cref{sec:assumptions} we explore how different types of assumptions affect the ability for SSL to achieve a change of rates.
\Cref{sec:examples} uses a simple two-point example to illustrate that improvements via idealistic SSL or sample size dependent classes do not necessarily translate to rate improvements, and how even if SSL increases the convergence speed for \emph{every} fixed distribution, \emph{minimax rates} may remain unchanged, driving home the need for rigorous minimax upper and lower bound analyses. A modification of the example yields a problem where SSL improves minimax rates from $\frac{1}{\sqrt{\l}}$ to $e^{-\l}$. The very fast SSL rate indicates that unlabeled data essentially ``solves'' the problem. We show that it is possible to achieve more interesting rate changes, namely from $\frac{1}{\sqrt{\l}}$ to $\frac{1}{\l}$ in \cref{sec:intermediate change}. 
\Cref{sec:discussion} provides a discussion of our results and an outlook on future work.

\section{Supervised and semi-supervised rates}\label{sec:definitions}

\subsection{Notation}
In the remainder of this paper, let $\X$ be an arbitrary feature space, and $\Y = \set{0,1}$ the set of labels. Data are distributed according to an unknown distribution $P$ on $\X \times \Y$ with marginal $P_X$ on $\X$. A \emph{classifier} is a function $f:\X \to \Y$. The \emph{risk} of $f$ on the distribution $P$ is measured via the misclassification probability:
\[
R_P(f) \bydef P(f(X) \neq Y)\,.
\]
We call the function $\eta(x) = P(Y=1|X=x)$ the {\em labeling function} associated to $P$ and the corresponding classifier $f_P^*(x)=\oo{\eta(x)\ge \frac{1}{2}}$, the {\em Bayes classifier}. This classifier achieves the minimal possible risk among all classifiers. Given a set $\H$ of classifiers of $\X$, we denote by $R_{P,\H}$ the minimal risk over all $h \in \H$:
\[
R_{P,\H}\bydef \inf_{h\in\H} R_P(h)\,.
\]
We call such a set $\H$ a \emph{hypothesis class}. Note that we are not assuming that $f_{P}^*$ is a member of $\H$. When it is clear from the context, we will drop the dependence on $P$ and write only $R_\H$. The learning objective is to identify a classifier $h$ (not necessarily in $\H$) such that $R_P(h)$ is close to $R_{P,\H}$. In the semi-supervised learning (SSL) setup, we are given a training sample $S$ made up of a set of $\l$ pairs $(X_1, Y_1), \dots, (X_\l, Y_\l)$ drawn i.i.d.\:according to $P$ and $u$ elements $X_1', \dots, X_u'$ drawn i.i.d.\:according to $P_X$.
The \emph{size} of this sample is $(\l, u)$ and its distribution will be denoted by $P^{(\l,u)}:=P^\l\times P_X^u$.
In the supervised learning (SL) setup, $S$ contains only the $\l$ pairs $(X_1, Y_1), \dots, (X_\l, Y_\l)$ and no unlabeled elements. This corresponds to a semi-supervised sample of size $(\l, 0)$. A \emph{semi-supervised learning algorithm} is a set of maps
\[
 A:(\X\times\Y)^\l\times\X^u\to \Y^\X
\]
for every $(\l, u) \in \N$. A supervised algorithm is such a set of maps for $\l \in \N$ and $u=0$. We will denote by $A(S)$ the classifier returned by the algorithm $A$ upon receiving $S$ as an input.

\subsection{Learning Rates}
The objective of this paper is to analyze whether SSL algorithms are more powerful than SL algorithms in a statistical learning sense. Performance of an algorithm is measured in terms of the excess risk $R(A(S))-R_\H$ which is a random variable (as it depends on the sample). One can either study the expectation\footnote{Notice that we only consider the positive part $(x)_+=\max(x,0)$ as we are only interested in controlling the excess risk, even if there can be situations where the algorithm performs better than the best member of $\H$, e.g. when $\H$ is very limited and the algorithm is able to generate more sophisticated classifiers (the so-called non-proper case).} $\ee{S\sim P^\l}{(R(A(S))-R_\H)_+}$ or the tails of this random variable $P_{S\sim P^\l}\bigl(R(A(S))-R_\H\ge \varepsilon\bigr)$.

While it is customary in the SL case to consider the so-called {\em sample complexity}, that is the function $m(\varepsilon, \delta)$ such that
\[
P_{S}\bigl(R(A(S)) - R_\H > \varepsilon\bigr) \leq \delta \qquad \forall \l \geq m(\varepsilon, \delta) \,,
\]
as a way to measure the performance of algorithms, we find it more convenient in the SSL case to work with errors instead. This means we are looking for a bound $\varepsilon$ on the excess risk as a function of the confidence $\delta$ and the sample size $(\ell,u)$.
To simplify the analysis, we will also only consider the expected excess risk (this leads to weaker tail bounds in terms of $\delta$ but we will leave this as a future research direction) and thus study $\ee{S\sim P^{(\l,u)}}{\left(R(A(S))-R_\H\right)_+}$ as a function of $(\l,u)$.

Our aim is to study the so-called minimax behavior of the expected excess risk, that is the performance of the best possible algorithm under the worst possible distribution, over a possibly restricted set $\Pn$ of \emph{admissible distributions}.

\begin{definition}[Minimax expected risk]\label{def:minimax risk}
The \emph{minimax expected excess risk} of learning a hypothesis class $\H$ on a sample of size $(\l, u)$ over the set of admissible distributions $\Pn$ is the expected excess risk of the best algorithm $A$ under the
worst distribution $P$:
\begin{equation}\label{eq:minimax_risk}
L(\l, u, \H, \Pn) \bydef \inf_A \sup_{P \in \Pn} \ee{S\sim P^{(\l,u)}}{\left(R(A(S))-R_{P,\H}\right)_+} \,.
\end{equation}
\end{definition}

\begin{definition}[SL and SSL learnability]\label{def:learnability}
We say that a problem $(\H, \Pn)$ is \emph{SL learnable} if $L(\l, 0, \H, \Pn)$ converges to zero as $\l$ goes to infinity. We say that a problem $(\H, \Pn)$ is \emph{SSL learnable} if, for some function $u: \N \to \N$,  $L(\l, u(\l), \H, \Pn)$ converges to zero as $\l$ goes to infinity.
\end{definition}

As stated, we use the \emph{convergence rate} of the minimax risk as a measure for the hardness of a learning problem. 
For functions $f,g:\N \to \R^{\geq 0}$, we say $f$ has \emph{rate} $g$ and write 
$f(\l) \rate g(\l)$ if there exist $c_1, c_2 \in \R^{>0}$ such that for all $\l$, $c_1 g(\l) \leq f(\l) \leq c_2 g(\l)$. The rate of convergence of $L(\l,0,\H,\Pn)$ as a function of $\l$ will be called the \emph{supervised rate} (SL rate).
In order to compare it to the semi-supervised situation we introduce a function $\u:\N \to \N$ that relates the amount of unlabeled points to the amount of labeled points. The \emph{semi-supervised rate} (SSL rate) for $u$ is the rate of convergence of $L(\l,\u(\l),\H,\Pn)$ as a function of $\l$.

\subsection{Concepts of unlabeled data helping}

Since we measure the hardness of a learning problem by the convergence rate of the expected excess risk, we say that unlabeled data helps if the SSL rate is faster than the SL rate.

\begin{definition}[Unlabeled data helps]\label{def:unlabeled helps}
We say that \emph{unlabeled data helps} to learn $\H$ over the set of admissible distributions $\Pn$ if there exists some $\u: \N \to \N$ such that
\begin{equation}
\lim_{\l \to \infty} \frac{ \inf_{A_{SSL}} \sup_{P \in \Pn} \ee{S\sim P^{(\l,\u(\l))}}{\left(R(A_{SSL}(S))-R_\H\right)_+} }{\inf_{A_{SL}} \sup_{P \in \Pn} \ee{S\sim P^\l}{\left(R(A_{SL}(S))-R_\H\right)_+} } = 0\,.
\end{equation}
\end{definition}

Note that this definition is rather restrictive. Indeed, we are looking for cases where the minimax rate (as a function of the number of labeled examples) is strictly improved by making use of a finite number of unlabeled examples. Hence situations where the expected excess error is improved by a constant factor are ruled out. Even if this constant factor is as large as the VC dimension of $\mathcal{H}$ for example (e.g. if the expected excess error goes from $d/\l$ to $1/\l$ where $d$ is the VC dimension), we still wouldn't consider this sufficient with our definition. In the following, we present some other definitions of ``unlabeled data helps'': the first option is allowing the class of distributions on which SL and SSL compete to depend on the size of the sample.
\begin{definition}[Unlabeled data helps non-uniformly]\label{def:non-uniform help}
We say that \emph{unlabeled data helps non-uniformly} to learn $\H$ over the sequence of distributions $(\Pn_\l)_{\l \in \N}$ if there exists some $\u:\N \to \N$ such that
\begin{equation}
    \lim_{\l \to \infty} \frac{\inf_{A_{SSL}} \sup_{P \in \Pn_{\l}} \ee{S\sim P^{(\l,\u(\l))}}{\left(R(A_{SSL}(S))-R_\H\right)_+} }{\inf_{A_{SL}} \sup_{P \in \Pn_{\l}} \ee{S\sim P^\l}{\left(R(A_{SL}(S))-R_\H\right)_+} } = 0 \,.
\end{equation}
\end{definition}
Alternatively, instead of comparing the performance over each fixed set $\Pn_\l$, we can compare how the performance of SSL progresses compared to the worst-case performance of the SL, resulting in a weaker definition of non-uniform help;
\begin{definition}[Unlabeled data helps weakly non-uniformly]\label{def:weak non-uniform help}
We say that \emph{unlabeled data helps weakly non-uniformly} to learn $\H$ over the sequence of distributions $(\Pn_\l)_{\l \in \N}$ with $\Pn_\l \subseteq \Pn_{\l+1}$ if there exists some $\u:\N \to \N$ such that
\begin{equation}
    \lim_{\l \to \infty} \frac{\inf_{A_{SSL}} \sup_{P \in \Pn_{\l}} \ee{S\sim P^{(\l,\u(\l))}}{\left(R(A_{SSL}(S))-R_\H\right)_+} }{\inf_{A_{SL}} \sup_{P \in \bigcup_{i \in \N} \Pn_{i}} \ee{S\sim P^\l}{\left(R(A_{SL}(S))-R_\H\right)_+} } = 0 \,. 
\end{equation}
\end{definition}
Finally, if we do not wish to concern ourselves with estimating the marginal from unlabeled data, we can assume that SSL receives the marginal distribution as an additional input.
\begin{definition}[Knowing the marginal helps]\label{def:marginal helps}
We say that \emph{knowing the marginal helps} to learn $\H$ over the set of admissible distributions $\Pn$ if
\begin{equation}
    \lim_{\l \to \infty} \frac{\inf_{A_{SSL}} \sup_{P\in \Pn} \ee{S\sim P^\l}{\left(R(A_{SSL}(S, P_X))-R_\H\right)_+} }{\inf_{A_{SL}} \sup_{P\in \Pn} \ee{S\sim P^\l}{\left(R(A_{SL}(S))-R_\H\right)_+} } = 0  \,.
\end{equation}
\end{definition}

\subsection{Related work}

Unlabeled data can be useful in many ways. For example, it has been shown to be helpful in active learning, proper learning and co-training, see e.g. \cite{urner2011access,urner2013probabilistic}. In this contribution, we are interested in passive semi-supervised learning without properness constraints. A significant body of work investigates how unlabeled data can provide constant-factor improvements, how it can be utilized besides improving minimax rates, and under which models and assumptions it cannot help at all. See \cite{kaariainen2005generalization,ben2008does,balcan2010discriminative,lu2009fundamental,seeger2000inputdependent}.

There are a number of papers, for example \cite{ratsaby1995learning,rigollet2007generalization} that formalize connections between marginal and labeling that could lead to unlabeled data being helpful in the learning process and propose SSL algorithms that exploit this connection. While they typically provide upper bounds on the performance of the proposed algorithms, they do not complement these results with lower bounds for SL algorithms and minimax analyses over worst-case distributions. We will see why this type of analysis is important in \cref{non working two point example}. The papers that do show performance gaps under the proposed assumptions usually follow one of the following three \emph{restrictive patterns} for SSL.

Under the first restrictive pattern, which we call \emph{improvement via idealistic SSL}, improvement is not defined using finite samples. Instead, an SSL algorithm is assumed to have access to the marginal data distribution, and these papers investigate whether knowing the marginal helps, as defined in \cref{def:marginal helps}. \cite{niyogi2013manifold,globerson2017effective,castelli1995exponential} show that under this type of SSL, it is possible to improve from unlearnable to rate $\frac{1}{\l}$ or $\frac{1}{\sqrt{\l}}$, depending on whether the labeling is assumed to be realizable. While these are instructive explorations of how unlabeled data \emph{might} help, we prove in \cref{lem:unlearnable} that this kind of result cannot carry over into the realistic setting of limited unlabeled data.

Under the second restrictive pattern, improvements stem from letting the learners fight against different sets of distributions depending on how many labeled and unlabeled samples are available. We call this pattern \emph{improvement via sample size dependent classes}. \cite{singh2009unlabeled,azizyan2013density}\footnote{3rd and 5th rows in Table on page 6 of \cite{singh2009unlabeled}} provide settings where unlabeled data helps non-uniformly \cref{def:non-uniform help}). Here, unlabeled data can be used to partition the feature space into regions on which the regression function is smooth enough to be approximated by a specialized learner with an improved rate. To make this partitioning possible using the unlabeled data available, but not so easy that SL can perform it adequately well using only the labeled samples, the minimal separation between the smooth regions needs to shrink as a function of the labeled and unlabeled sample size.
Note that if the unlabeled data provides a constant factor rate improvement for each $\Pn_\l$, and the constant factor grows without bound, then unlabeled data helps in this definition, even though for each learning problem the improvement is still only a fixed constant factor. This type of example can be constructed by letting $\H$ be of infinite VC-dimension, but setting $\Pn_\l$ such that the covering number of $\H$ for each fixed marginal is small and a $\varepsilon$-cover of $\H$ can be identified for each $\Pn_\l$ using the $\u(\l)$ unlabeled samples, but not the $\l$ labeled samples. Then an SSL algorithm can first learn a small $\varepsilon$-cover and perform ERM over that small set, while an SL algorithm lacks this information.
This type of construction is used in \cite{darnstadt2013unlabeled,darnstaedt2015investigation}, although they only explicitly prove that unlabeled data helps weakly non-uniformly (\cref{def:weak non-uniform help}).

A feature of this construction is to restrict the set of admissible distributions $\Pn$ such that estimating some property of the marginal is beneficial to solving the learning problem, and learning this property is statistically harder than solving the learning problem under knowledge of the property, or solving it without the knowledge in the first place. This construction does not work when the property in question becomes arbitrarily hard to estimate, e.g. because the separation between decision sets goes to zero. \Cref{def:non-uniform help,def:weak non-uniform help} can be seen as consequences of this problem. It can be countered by bounding the hardness of the marginal estimation problem, in the case of the decision set estimation by bounding the margin width away from zero. This leads to rate improvements in \cite{singh2009unlabeled}\footnote{Last row in Table on page 6} in the  pattern which we call \emph{improvements via easy marginal estimation}. We give an example free from this particular restriction in \cref{working two point example}.

In addition, to the best of our knowledge, no previous results show that is is possible to go from rate $\frac{1}{\sqrt{\l}}$ to $\frac{1}{\l}$ (the rates typically encountered in VC-class learning) using unlabeled data. We will show one way of constructing such problems in \cref{prop:main}.

\section{When Does Unlabeled Data Realistically Help?}\label{sec:assumptions}

\subsection{No free learnability}

If we do not restrict the set of admissible distributions $\Pn$ and require SSL to compete with SL on all possible distributions, an improvement of rates is impossible. This is due to the fact that there exist tight lower bounds for learning in the agnostic and realizable case that do not depend on whether or not the learning algorithm has access to the marginal distribution of the data. These bounds demonstrate that for every hypothesis class, there exist marginals for which the best hypothesis is intrinsically hard to learn, so unlabeled data cannot help for unrestricted $\Pn$.

If we do restrict $\Pn$, what types of improvements are possible? The most interesting type of result would be to demonstrate a ``learnability gap'': to show that there exist problems that are not learnable by any supervised learner, but can be learned using a semi-supervised learner. The existence of problem sequences where unlabeled data helps non-uniformly insinuate that a learnability gap is possible. After all, the examples in \cite{darnstadt2013unlabeled,darnstaedt2015investigation} may not be learnable in a supervised fashion, but each $\Pn_\l$ is learnable at the same rate given enough unlabeled data. What these analyses hide is that the \emph{number} of unlabeled samples needed grows without bound as the set of admissible distributions grows. If the unlabeled sample size is a fixed function of the labeled sample size (and does not depend on the target distribution), the benefit of unlabeled data vanished. Admitting an unlabeled sample of arbitrary size is clearly not a practical setting. Is it possible to modify these examples to find a learnability gap with the unlabeled sample size limited as a function of $\l$? As the following Theorem shows, this is not the case.

\begin{theorem}\label{lem:unlearnable}
If a problem $(\H, \Pn)$  is unlearnable in the SL setting, i.e. $L(\l, 0, \H, \Pn)$ does not converge to $0$, then it is also unlearnable in the SSL setting, i.e. for any $\u: \N \to \N$, $L(\l, \u(\l), \H, \Pn)$ does not converge to $0$.
\end{theorem}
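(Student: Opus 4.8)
The plan is to prove the contrapositive by a \emph{simulation} argument: a supervised learner holding $\l$ labeled examples can imitate a semi-supervised learner that uses $\l_1$ labeled and $u_1$ unlabeled examples, as long as $\l \ge \l_1 + u_1$, by feeding it the first $\l_1$ labeled pairs and the feature-coordinates of the next $u_1$ pairs (discarding those $u_1$ labels, and ignoring any leftover pairs). The engine is the distributional identity that if $(X_1,Y_1),\dots,(X_\l,Y_\l)$ are i.i.d.\ $\sim P$, then the tuple $\bigl((X_1,Y_1),\dots,(X_{\l_1},Y_{\l_1}),X_{\l_1+1},\dots,X_{\l_1+u_1}\bigr)$ is distributed exactly as $P^{(\l_1,u_1)}=P^{\l_1}\times P_X^{u_1}$, because disjoint blocks of an i.i.d.\ sample are independent and the feature-marginals of i.i.d.\ pairs are themselves i.i.d.\ $\sim P_X$.

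Concretely, I would first isolate a monotonicity lemma: for all $\l_1,u_1\in\N$ and all $\l\ge \l_1+u_1$,
\[
L(\l,0,\H,\Pn)\ \le\ L(\l_1,u_1,\H,\Pn).
\]
To prove it, fix any SSL algorithm $A$ and let $A'$ be the SL algorithm that on input $((x_i,y_i))_{i=1}^{\l}$ outputs $A\bigl((x_1,y_1),\dots,(x_{\l_1},y_{\l_1}),x_{\l_1+1},\dots,x_{\l_1+u_1}\bigr)$ (for $\l<\l_1+u_1$ the definition of $A'$ is immaterial). By the identity above, for every $P\in\Pn$,
\[
\ee{S\sim P^{\l}}{\bigl(R_P(A'(S))-R_{P,\H}\bigr)_+}\ =\ \ee{S'\sim P^{(\l_1,u_1)}}{\bigl(R_P(A(S'))-R_{P,\H}\bigr)_+}.
\]
Taking $\sup_{P\in\Pn}$ of both sides and then the infimum over $A$ (every such $A'$ is a legitimate supervised algorithm, so the left side upper bounds $L(\l,0,\H,\Pn)$) gives the lemma.

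The theorem then follows in two lines. Assume towards a contradiction that $(\H,\Pn)$ is SL-unlearnable yet SSL-learnable via some $\u:\N\to\N$, so $L(\l,\u(\l),\H,\Pn)\to 0$. For each fixed $\l_1$, the lemma with $u_1=\u(\l_1)$ yields $L(\l,0,\H,\Pn)\le L(\l_1,\u(\l_1),\H,\Pn)$ for all $\l\ge \l_1+\u(\l_1)$, hence $\limsup_{\l\to\infty}L(\l,0,\H,\Pn)\le L(\l_1,\u(\l_1),\H,\Pn)$; since the right-hand side tends to $0$ as $\l_1\to\infty$, this forces $L(\l,0,\H,\Pn)\to 0$, contradicting SL-unlearnability.

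I do not expect a real obstacle: the proof needs no concentration bound, no information-theoretic lower bound, and no structural hypothesis on $\H$ or $\Pn$. The two points to handle with care are (i) the distributional identity --- in particular that discarding the $u_1$ labels and the $\l-\l_1-u_1$ surplus pairs introduces no dependence between the simulated labeled and unlabeled parts --- and (ii) the quantifier order in \cref{def:learnability,def:minimax risk}: the near-optimal SSL algorithm witnessing that $L(\l_1,\u(\l_1),\H,\Pn)$ is small may depend on the target accuracy, but since all we use is that \emph{for each} $\varepsilon$ there is one finite pair $(\l_1,\u(\l_1))$ on which \emph{some} algorithm does well, this is harmless. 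The moral is that a finite unlabeled sample is just a finite batch of free feature draws, which a supervised learner can manufacture from its own labeled sample at the cost of only a constant additive increase in sample size --- far too cheap to bridge a learnability gap.
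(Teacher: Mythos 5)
Your proposal is correct and follows essentially the same route as the paper: the paper also proves $L(\l+\u(\l),0,\H,\Pn)\le L(\l,\u(\l),\H,\Pn)$ by having a supervised learner discard labels, combines it with monotonicity of $L(\l,0,\H,\Pn)$ in $\l$ (ignoring surplus examples), and concludes by the same contrapositive/limit argument. Your write-up merely makes the distributional identity and the $\limsup$ step more explicit.
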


\begin{proof}
To avoid cluttered notation, we here omit $\H$ and $\Pn$ from the error rates.
We prove that if there is some $u$ such that $\lim L(\l, \u(\l))=0$ then $\lim L(\l, 0)=0$. Indeed, $L(\l,0)\ge 0$ by definition and for any $\l$ and $\u$, $L(\l + \u(\l), 0) \leq L(\l, \u(\l))$, since an SL algorithm can simply opt to forget the labels of $\u(\l)$ labeled samples and treat them as unlabeled. Furthermore, $L(\ell,0)$ is non-increasing since an algorithm receiving an $\l+1$ sample can always ignore an example. Thus, $L(\l,0)$ is a non-negative, monotonously decreasing sequence with a sub-sequence that is bounded by $L(\l, \u(\l))$, so if $\lim L(\l, \u(\l))=0$ then $\lim L(\l, 0)=0$.
\end{proof}

On the one hand, \cref{lem:unlearnable} shows that results obtained under knowledge of the marginal or access to infinite amounts of unlabeled data do not necessarily carry over to realistic settings. On the other hand it demonstrates that unlabeled data can only provide realistic improvements on problems that are already learnable in a supervised setting.

\subsection{Why we relate the labeling and the marginal}\label{sec:product assumptions}

What \emph{types} of assumptions on $\Pn$ can lead to rate improvements? The admissible distributions proposed in \cite{darnstadt2013unlabeled,globerson2017effective} are ``product assumptions'' in the sense that they restrict the space of admissible marginals and, in the case of \cite{darnstadt2013unlabeled}, the admissible labelings to those realizable by the hypothesis class, but they do not restrict how marginals and labelings relate to each other: \emph{any} combination of an admissible marginal and labeling is an admissible distribution. Note that, as evidenced by the constructions in these papers, independence of marginal and labeling \emph{does not} immediately imply that unlabeled data or knowledge of the marginal distribution is useless for learning. While it has been shown e.g. by \cite{seeger2000inputdependent} that, if the parameters determining marginal and labeling are independent, unlabeled data is not useful in estimating the parameters that determine the labeling, it may still be helpful in finding a low-risk classifier. For example, in the realizable case, unlabeled data can help determine which of several hypotheses that are compatible with a labeled sample is the most representative, as demonstrated by \cite{kaariainen2005generalization}. Is it possible to find a set of admissible distributions such that unlabeled data helps in the sense of \cref{def:unlabeled helps} without relating marginal and labeling?

For classes of finite VC dimension, under realizability assumptions on the labeling, \cite{darnstadt2013unlabeled} answer this question in the negative up to $\log$ factors. They show that if $\H$ is fixed and finite, then there is a supervised learner whose sample complexity for \emph{every} marginal is at most $\mathcal{O}(\ln|\H|)$ times worse than that of a semi-supervised learner with access to the marginal. If $\H$ is not finite, but of finite VC-dimension, the sample complexity benefit of using a learner specialized to the marginal can be at most $\mathcal{O}(\vc(\H) \cdot \log(1 / \varepsilon))$, though they do not provide examples that yield such a gap. \cite{golovnev2019information} provide lower bounds that show that when $\H$ is the class of projections over $\{0,1\}^d$, for any supervised algorithm there exists a subset of marginals where the algorithm requires $\Omega(\vc(\H))$ as many samples as an algorithm with knowledge of the marginal.

Without any restriction on the possible labeling functions, an improvement of SSL rates is equally out of the question: even a small portion of the input space that can be labeled arbitrarily leads to slow SSL rates, as shown by \cref{thm:rich}.

\begin{definition}\label{def:rich}
We say that a family of probability distributions $\Pn$ is \emph{rich for a class $\H$} 
if there exist hypotheses $h, h' \in \H$ and marginal $P_X$ such that $P_X(\{x: h(x) \neq h'(x), h(x) = 0) \neq P_X(\{x: h(x) \neq h'(x), h(x) = 1\})$, and for every $\alpha \in (0, \frac{1}{2})$, $\Pn$ contains $P_\alpha$ and $P_{-\alpha}$ which consist of $P_X$ paired with labeling functions $\eta_\alpha$ and $\eta_{-\alpha}$, that agree with $h$ where $h = h'$ and take values $\frac{1}{2} + \alpha$ and $\frac{1}{2} - \alpha$ respectively where $h \neq h'$.
\end{definition}

\begin{theorem}\label{thm:rich}
Let $\H$ be a class of finite VC dimension. Then for every set of probability distributions $\Pn$ that is rich for $\H$, knowing the marginal does not help to learn $\H$ over the set of admissible distributions $\Pn$.
\end{theorem}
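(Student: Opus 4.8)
The plan is to show that the numerator and the denominator of the ratio in \cref{def:marginal helps} are both of order $1/\sqrt{\l}$, so that the ratio stays bounded below by a positive constant and, in particular, does not converge to $0$. The upper bound on the denominator is the easy half: since $\vc(\H)=:d<\infty$, agnostic empirical risk minimisation over $\H$ attains $\ee{S\sim P^\l}{(R_P(\mathrm{ERM}(S))-R_{P,\H})_+}\le c\sqrt{d/\l}$ for \emph{every} distribution $P$ — this is the classical agnostic bound for a VC class, which carries no logarithmic factor — and hence $\inf_{A_{SL}}\sup_{P\in\Pn}\ee{S\sim P^\l}{(R(A_{SL}(S))-R_{P,\H})_+}\le c\sqrt{d/\l}$. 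The work is therefore to prove a matching lower bound of order $1/\sqrt{\l}$ on the semi-supervised-with-marginal minimax risk.

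The idea behind the lower bound is that an extra input $P_X$ is useless for telling apart two admissible distributions that already share the marginal $P_X$, so on such a family the learner is no better off than a purely supervised one. Concretely: let $P_X,h,h'$ be as in \cref{def:rich}, set $\Delta:=\{x:h(x)\neq h'(x)\}$, $A_i:=\Delta\cap\{h=i\}$ for $i\in\{0,1\}$, and $\beta:=|P_X(A_0)-P_X(A_1)|>0$. For each $\l$ I would pick $\alpha=\alpha_\l\asymp 1/\sqrt{\l}$ (with a constant to be fixed) and consider the pair $\{P_{\alpha_\l},P_{-\alpha_\l}\}\subseteq\Pn$, whose members have the common marginal $P_X$. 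Restricting the supremum in the numerator to this two-point family, and using that $S\mapsto A_{SSL}(S,P_X)$ runs over all supervised algorithms exactly as $A_{SSL}$ runs over all semi-supervised ones, the numerator is bounded below by the supervised two-point minimax risk $L(\l,0,\H,\{P_{\alpha_\l},P_{-\alpha_\l}\})$.

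To bound this two-point risk I would first record, for every classifier $g$, the decomposition
\[
R_{P_{\pm\alpha}}(g)=E(g)+\tfrac12 P_X(\Delta)\pm\alpha\bigl(p_0(g)-p_1(g)\bigr),
\]
where $E(g):=P_X(\{x\notin\Delta:g(x)\neq h(x)\})\ge 0$ and $p_i(g):=P_X(\Delta\cap\{g=i\})$. Swapping the names of $h$ and $h'$ if necessary so that $h'$ is the better hypothesis under $P_\alpha$, one has $R_{P_\alpha,\H}\le R_{P_\alpha}(h')$ and $R_{P_{-\alpha},\H}\le R_{P_{-\alpha}}(h)$; substituting these into the decomposition and adding the two resulting excess-risk estimates gives, for every $g$,
\[
\bigl(R_{P_\alpha}(g)-R_{P_\alpha,\H}\bigr)+\bigl(R_{P_{-\alpha}}(g)-R_{P_{-\alpha},\H}\bigr)\ \ge\ 2\alpha\beta,
\]
so no single classifier is simultaneously good under both distributions. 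A standard Le~Cam two-point argument (passing to positive parts via $(x)_+\ge x$, then using that the two ``bad events'' cover the sample space) then yields $L(\l,0,\H,\{P_\alpha,P_{-\alpha}\})\ge\tfrac12\alpha\beta\bigl(1-\tv(P_\alpha^\l,P_{-\alpha}^\l)\bigr)$. Finally, $\kl(P_\alpha^\l\|P_{-\alpha}^\l)=\l\,P_X(\Delta)\,\kl\!\bigl(\mathrm{Bern}(\tfrac12+\alpha)\,\big\|\,\mathrm{Bern}(\tfrac12-\alpha)\bigr)=O(\l\alpha^2)$, so by Pinsker's inequality the constant in $\alpha_\l\asymp 1/\sqrt{\l}$ can be chosen so that $\tv(P_{\alpha_\l}^\l,P_{-\alpha_\l}^\l)\le\tfrac12$. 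Combining, the numerator is at least $\alpha_{\l}\beta/4\asymp 1/\sqrt{\l}$, which together with the $c\sqrt{d/\l}$ upper bound on the denominator makes the ratio at least a fixed positive multiple of $\beta/\sqrt{d}$ for all large $\l$; hence the limit in \cref{def:marginal helps} is not $0$.

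The step I expect to be the main obstacle is the ``no classifier good for both'' inequality, because $R_{P,\H}$ is an infimum over a class $\H$ whose internal structure is otherwise uncontrolled. What rescues it is that we only ever need an \emph{upper} bound on $R_{P,\H}$ — the conservative direction, since it shrinks the excess risk we are lower-bounding — and the obvious upper bounds $R_{P_\alpha}(h')$ and $R_{P_{-\alpha}}(h)$ already record the tension between $P_\alpha$ and $P_{-\alpha}$ over the disagreement region $\Delta$ (they want opposite labels there). Minor care is also needed with the $(\cdot)_+$ truncations in \cref{def:minimax risk} and with the degenerate case $P_X(A_0)=0$ or $P_X(A_1)=0$, which is harmless as soon as $\beta>0$; and one should note that finite $\vc(\H)$ is used only for the ERM upper bound.
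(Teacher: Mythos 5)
Your proposal is correct and follows essentially the same route as the paper's proof of \cref{thm:rich}: a VC-based $1/\sqrt{\l}$ upper bound for the supervised minimax risk, combined with a two-point lower bound on the pair $P_{\alpha},P_{-\alpha}$ sharing the marginal $P_X$ with $\alpha \asymp 1/\sqrt{\l}$, controlled via the KL computation over the disagreement region and Pinsker's inequality. The only difference is cosmetic: you run Le~Cam directly through the pointwise ``sum of excess risks $\ge 2\alpha\beta$'' inequality, whereas the paper reduces to a binary hypothesis test and invokes Tsybakov's Theorem 2.2(i); both yield the same $\asymp \alpha\beta/\sqrt{\l}$ bound.
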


\begin{proof}
We give a detailed proof of \cref{thm:rich} in \cref{appendix:richness proof}. The main idea is that for a suitably chosen constant $a$, the distributions $P_{a/\sqrt{\l}}$ and $P_{-a/\sqrt{\l}}$ have the same marginal, and deciding between them using $\l$ labeled samples has expected excess risk of order $\frac{1}{\sqrt{\l}}$.
\end{proof}

\section{Relating marginal and labeling -- a helpful and a non-helpful example}\label{sec:examples}

In the following we provide an example where knowing the marginal helps (\cref{def:marginal helps}) and it is possible to construct a sequence of  distributions such that unlabeled data helps non-uniformly and weakly non-uniformly (\cref{def:unlabeled helps,def:non-uniform help}), but unlabeled data does not improve minimax rates.

\begin{example}\label{non working two point example}
Let $\X = \{x_1, x_2\}$ and $\H = \set{0,1}^\X$. Then every marginal distribution $P_X$ on $\X$ can be parameterized by $\beta \in \left(-\frac{1}{2}, \frac{1}{2}\right)$ with $P_X^\beta(x_1) = \frac{1}{2} + \beta$, $P^\beta_X(x_2) = \frac{1}{2} - \beta$. Now, for each $P_X^\beta$, restrict $\Pn$ to contain only those $P$, denoted by $P^{\alpha \beta}$, such that $P(Y=1|x_1) = \frac{1}{2} + \alpha = P(Y=0 | x_2)$ with $\alpha \beta > 0$, i.e. restrict the possible labelings such that the Bayes classifier assigns opposite labels to the two points, and labels with 1 the point that is seen more often.
\end{example}
In \cref{non working two point example}, the Bayes classifier is completely determined by the marginal distribution. As such, this is an example where we can observe improvements via idealistic SSL, i.e. knowing the marginal helps (\cref{def:marginal helps}): an SSL algorithm that knows the marginal has expected excess risk zero, while the SL rate of learning $(\H, \Pn)$ is $\frac{1}{\sqrt{\l}}$.

It is also the case that unlabeled data helps non-uniformly and weakly non-uniformly (\cref{def:non-uniform help,def:weak non-uniform help}). Proofs for the following lower and upper bounds can be found in \cref{appendix:two points bounds}. Let $\u(\l) = \l^2$ and set 
\begin{equation}
    \Pn_\l \bydef \{P^{\alpha\beta} \;|\; |\beta| \geq \frac{1}{\sqrt{\l}}\} \,.
\end{equation}
Then the SSL algorithm $A$ that discards the labeled sample and uses the unlabeled sample to determine which point is more likely to appear is essentially predicting the flip of a coin with bias at least $\frac{1}{\sqrt{\l}}$, and its expected excess risk can be bounded by
\begin{equation}
    \sup_{P = P^{\alpha \beta} \in \Pn_\l} \ee{S\sim P^{(\l,\l^2)}}{\left(R(A(S))-R_{P,\H}\right)_+} \leq \sup_{\alpha,\beta} 2 \alpha e^{-2 \l^2 \beta^2} \leq e^{-2 \l}
\end{equation}
using Hölder's inequality, while 
\begin{equation}
    \sup_{P \in \Pn_\l} \ee{S\sim P^{\l}}{\left(R(A(S))-R_{P,\H}\right)_+} \geq \frac{1}{\sqrt{\l}}\,, 
\end{equation}
so 
\begin{equation}
    \lim_{\l \to \infty} \frac{\inf_A \sup_{P \in \Pn_{\l}} \ee{S\sim P^{(\l,\u(\l))}}{\left(R(A(S))-R_{P,\H}\right)_+} }{\inf_A \sup_{P \in \Pn_{\l}} \ee{S\sim P^\l}{\left(R(A(S))-R_{P,\H}\right)_+} } = 0 \,.
\end{equation}

We may be tempted to think that unlabeled data helps learn $\H$ for $\Pn = \bigcup_\l \Pn_\l$ in accordance with \cref{def:unlabeled helps}. After all, the correct labeling can be estimated using only unlabeled samples. However, for \emph{no} function $\u$ will access to $\u(\l)$ unlabeled samples change \emph{minimax rates}. The \emph{relevant parameter} of the marginal that determines the labeling is $\sgn(\beta)$. While $\beta$ itself can be estimated with high accuracy independently of its value, $\sgn(\beta)$ is difficult to estimate when $\beta$ is close to zero. No matter how many unlabeled samples are available, there exists some value of $\beta$ such that the probability of incorrectly estimating $\sgn(\beta)$ is high. Indeed, a minimax analysis shows that the SL and SSL rate are both lower bounded by $\frac{1}{\sqrt{\l}}$ for any $\u$. This difficulty can be overcome as is suggested by the definition of $\Pn_\l$: by bounding $|\beta|$ away from zero.

\begin{example}\label{slightly cheating two point example}
Let $(\H, \Pn)$ be the problem defined in \cref{non working two point example}. For $c \in \left(0, \frac{1}{4}\right)$, let
\begin{equation}
    \Pn' \bydef \{P^{\alpha\beta} \;|\; |\beta| \geq c \} \,.
\end{equation}
The SSL rate on $(\H, \Pn')$ is arbitrarily fast the faster $\u$ grows, while the SL rate is lower-bounded by $\frac{c}{2} e^{-32 \l c ^2}$.
\end{example}

This is an example of \emph{improvements via easy marginal estimation}. It yields a problem where unlabeled data helps improve from one exponential rate to any faster rate. Both SL and SSL can take advantage of the information provided by marginal estimation, but SSL overpowers SL due to the vast amounts of data it can leverage. 

Can we give an example where unlabeled data helps that is free from the common patterns favoring SSL? The difficulty with using SSL to estimate parameters of the marginal is that, in contrast to what occurs e.g. when estimating the bias of a coin in supervised learning, if the properties are hard to estimate this does not imply that the excess risk of making the wrong choice is small. Indeed, as $\beta$ approaches $0$, the probability of incorrectly guessing $\sgn(\beta)$ approaches $\frac{1}{2}$ for each fixed number of unlabeled samples, but the cost of guessing incorrectly is fixed at $\alpha$. This problem can be circumvented by linking the hardness of the learning problem for the supervised learner to the excess risk of the wrong choice: by setting $\alpha = \beta$. By limiting $\Pn$ in this way, we can achieve an improvement of minimax learning rate, without bounding the hardness of the marginal estimation.

\begin{example}\label{working two point example}
Let $\X = \{x_1, x_2\}$ and $\H = \set{0,1}^\X$. Now, restrict $\Pn$ to those distributions such that $P(Y=1 | x_i) = P_X(x_i)$. That is, the Bayes classifier labels $x_1$ and $x_2$ with opposite labels, the noise of each labeling is equal to the noise in choosing $x_i$, and the point that is less likely to be seen is the one which the Bayes classifier labels with 0. 
\end{example}

Let $A$ be the algorithm that disregards all labels, assigns 1 to the $x_i$ that appears more often in the sample and 0 to the $x_i$ that appears less often. The expected excess loss of this algorithm is bounded by $\frac{1}{\sqrt{\l + \u(\l)}}$, so the SSL minimax rate for $\u(\l) = \l^2$ is at least $f(\l) = \frac{1}{\l}$, the SSL minimax rate for $\u(\l) = \l^4$ is at least $f(\l) = \frac{1}{\l^2}$ and the SSL minimax rate for $\u(\l) = \exp(\l)$ is exponential. The SL rate of $(\H, \Pn)$ is $\frac{1}{\sqrt{\l}}$.\footnote{A similar result could be achieved by requiring that $|\frac{1}{2} - P(Y=1 | X = x_i)| \leq |\frac{1}{2} - P(X = x_i)|$.} Proofs of lower and upper bounds can be found in \cref{appendix:two points bounds}.

\Cref{working two point example} is a simple demonstration of a problem where SL and SSL have different \emph{minimax} rates even though the amount of unlabeled data available to the SSL learner is limited by a fixed function. In order to achieve this improvement, we made the marginal fully determine the labeling, and connected the difficulty of learning the marginal to the excess risk of choosing the wrong hypothesis. This allows us to achieve arbitrarily fast learning rates if the number of unlabeled samples available is very large. This artificial example gives some insight into how unlabeled data can help, or fail to help. However, the restrictions on $\Pn$ are so strong that observing labels becomes completely unnecessary. We are interested in problems where unlabeled data provides a significant, but not arbitrary advantage, and collecting at least some labels is unavoidable. The types of rates most common to supervised PAC learning are $\frac{1}{\sqrt{\l}}$ and $\frac{1}{\l}$. Is it possible to construct a problem that has the slower SL rate and the faster SSL rate, but does not become arbitrarily fast? We will show that this is indeed possible. Our proof uses a general-purpose construction that can be used to create further problems with intermediate rate pairs.

\section{A non-trivial rate change}\label{sec:intermediate change}
In the following, we show that there exist problems of learning over classes of finite VC dimension where SSL leads to ``intermediate rates''. That is, we provide an example of a problem where unlabeled data helps improve rates from $\frac{1}{\sqrt{\l}}$ to $\frac{1}{\l}$, but no $\u: \N \to \N$ can make the rate improve beyond $\frac{1}{\l}$.

\begin{theorem}\label{prop:main}
There exists a problem with SL rate $\frac{1}{\sqrt{\l}}$ and SSL rate $\frac{1}{\l}$ that cannot be improved by increasing the amount of unlabeled data.
\end{theorem}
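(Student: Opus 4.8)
The plan is to realize the required $(\H,\Pn)$ as a \emph{direct sum} of two ingredients on disjoint halves of the feature space: a ``flexible'' part on which SSL strictly outperforms SL, glued to a ``rigid'' part whose minimax rate is $\tfrac1\l$ \emph{no matter how much} unlabeled data is supplied. Split $\X = \X_1 \sqcup \X_2$ and let every admissible distribution assign mass $\tfrac12$ to each side. On $\X_1 = \set{x_1,x_2}$ install a scaled copy of \cref{working two point example}, with $\H_1 = \set{0,1}^{\X_1}$ and $\Pn_1$ as there. On $\X_2 = \set{z_1,z_2,\dots}$ fix once and for all a single marginal $Q$ of full support with geometric weights $Q(z_k)\rate 2^{-k}$, let $\H_2$ be the VC-dimension-$1$ class of initial-segment indicators $z_j\mapsto\oo{j\le k}$ (with $k\ge 0$), and let $\Pn_2$ be the \emph{realizable} distributions with marginal $Q$ and $\eta=h$ for some $h\in\H_2$. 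Take $\H$ to be all hypotheses restricting to some $h_1\in\H_1$ on $\X_1$ and some $h_2\in\H_2$ on $\X_2$ (so $\vc(\H)=\vc(\H_1)+\vc(\H_2)=3<\infty$), and $\Pn$ the distributions whose two halves lie in $\Pn_1$ and $\Pn_2$.

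Three facts drive the argument. \emph{(i) Composition:} up to constant factors in the sample sizes, $L(\l,u,\H,\Pn)\rate L(\l,u,\H_1,\Pn_1)+L(\l,u,\H_2,\Pn_2)$; the upper bound runs the two component learners on their respective sub-samples, whose sizes are $\rate\l/2$ and $\rate u/2$ outside an event of probability $e^{-\Omega(\l)}$ (negligible since excess risk is always $\le 1$), and the lower bound lets the adversary concentrate all difficulty in the harder component and trivialize the other (e.g.\ $\eta\equiv 0$ on $\X_2$, or $|\beta|$ a fixed constant on $\X_1$), at the cost of only a constant-factor shrinkage of the sample. \emph{(ii) The rigid part is unlabeled-blind with rate $\tfrac1\l$:} since every $P\in\Pn_2$ has the \emph{same} marginal $Q$, the unlabeled points carry no information about $P$, so any SSL algorithm is matched by the SL algorithm that draws its own $u$ points from $Q$; hence $L(\l,u,\H_2,\Pn_2)=L(\l,0,\H_2,\Pn_2)$ for all $u$. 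And realizable learning of a VC-dimension-$1$ class has minimax expected excess risk $\rate\tfrac1\l$: the $\mathcal{O}(\vc(\H_2)/\l)$ upper bound holds for any empirical risk minimizer, and for the lower bound the adversary places the true threshold so that its bracketing point $z_k$ has $Q$-mass $\rate\tfrac1\l$, which the labeled sample misses with constant probability, forcing excess risk $\Omega(\tfrac1\l)$. \emph{(iii) The flexible part:} by \cref{working two point example} (proved in \cref{appendix:two points bounds}), $L(\l,0,\H_1,\Pn_1)\rate\tfrac1{\sqrt{\l}}$ while $L(\l,u,\H_1,\Pn_1)\lesssim\tfrac1{\sqrt{\l+u}}$.

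Assembling: by (i), (ii), (iii) we get $L(\l,0,\H,\Pn)\rate\max(\tfrac1{\sqrt{\l}},\tfrac1\l)=\tfrac1{\sqrt{\l}}$, the SL rate. With $\u(\l)=\l^2$ the flexible term is $\lesssim\tfrac1{\sqrt{\l+\l^2}}\rate\tfrac1\l$ and the rigid term is $\rate\tfrac1\l$, so $L(\l,\u(\l),\H,\Pn)\rate\tfrac1\l$: unlabeled data improves the rate to $\tfrac1\l$ (in particular $\tfrac{1/\l}{1/\sqrt{\l}}\to 0$, so unlabeled data helps in the sense of \cref{def:unlabeled helps}). And for \emph{any} $\u:\N\to\N$, (i) and (ii) give $L(\l,\u(\l),\H,\Pn)\gtrsim L(\Omega(\l),\u(\l),\H_2,\Pn_2)=L(\Omega(\l),0,\H_2,\Pn_2)=\Omega(\tfrac1\l)$, so no unlabeled budget pushes the SSL rate below $\tfrac1\l$. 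The same template with other flexible/rigid components yields further intermediate rate pairs. The main obstacle I anticipate is making (i) and (ii) watertight: the composition lemma is routine but fiddly, since the learner cannot control how its labeled and unlabeled points split across $\X_1,\X_2$, so one must argue via concentration of the binomial split and verify that every rate in play is stable under constant-factor changes in sample size; and the rigid lower bound must be genuinely tied to the \emph{fixed} marginal $Q$, with the weights $Q(z_k)\rate 2^{-k}$ chosen so that a disagreement region of scale $\rate\tfrac1\l$ is always available to the adversary at sample size $\l$ while the $\mathcal{O}(\tfrac1\l)$ ERM upper bound still holds.
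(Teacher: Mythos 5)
Your construction is correct and follows essentially the same architecture as the paper's proof: a disjoint-domain mixture with weight $\tfrac12$ on each side, the ``flexible'' component being exactly \cref{working two point example}, and a composition argument (your fact (i)) that is precisely what the paper proves in \cref{mixture lower bound,mixture upper bound} under the Bayes-in-class assumption, which your construction satisfies. The one genuine difference is the ``rigid'' component: the paper takes an arbitrary finite-VC class under realizability and cites \citet[Theorem 1]{darnstadt2013unlabeled} for the claim that its SL and SSL rates are both $\tfrac1\l$ for every $\u$, whereas you instantiate it concretely as initial-segment indicators under a single fixed geometric marginal $Q$, so that unlabeled-data-blindness is immediate (all $P\in\Pn_2$ share the factor $Q^{\u}$, so the two-point/Le Cam lower bound is untouched by any unlabeled budget) and the $\Theta(\tfrac1\l)$ rate has an elementary self-contained proof (consistent-learner expected error $\le 2/(\l+1)$, adversary placing the disagreement atom at mass $\asymp\tfrac1\l$). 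This buys self-containedness and makes the ``cannot be improved by more unlabeled data'' claim transparent, at the cost of verifying the component rates by hand; the paper's choice offloads exactly that verification to the literature. One cosmetic caveat: your statement that an SL algorithm can ``draw its own $u$ points from $Q$'' implicitly requires randomized algorithms (or an averaging/derandomization step); it is cleaner to note, as you effectively do, that only the lower bound $L(\l,\u(\l),\H_2,\Pn_2)=\Omega(\tfrac1\l)$ is needed and it holds for every $\u$ because the unlabeled factor is identical across the two competing distributions.
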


\begin{proof}
The proof of the Theorem relies on mixing two types of problems: one with SL and SSL rate $\frac{1}{\l}$ (i.e. unlabeled data does not help), and another with SL rate $\frac{1}{\sqrt{\l}}$, but very fast SSL rate (i.e. unlabeled data helps a lot). This mixture is defined as follows: Let $(\H_A, \Pn_A)$ be a problem with domain $\X_A$ and $(\H_B, \Pn_B)$ be a problem with domain $\X_B$. Without loss of generality, let $\X_A \cap \X_B = \emptyset$. Then the \emph{mixture problem} $(\H, \Pn)$ is given by
\begin{align*}
\H &\bydef \set{h : \X_A \cup \X_B \to \set{0,1}: h|_{\X_A} \in \H_A, h|_{\X_B} \in \H_B} \,,\\
\Pn &\bydef \set{\mixparam \cdot P_A + \mixparam \cdot P_B: P_A \in \Pn_A,  P_B \in \Pn_B}\,.
\end{align*}

In \cref{mixture lower bound,mixture upper bound} in \cref{appendix: mixture upper bounds proof}, we prove that if we assume that for every $P \in \Pn$ the Bayes classifier is in $\H$, then the SL and SSL rates of the mixture problem are dictated by the respective slower rates of the component problems. Now let problem $A$ be \cref{working two point example} and let problem $B$ be the problem of learning a hypothesis class with finite VC-dimension under the realizability assumption. According to \cite[Theorem 1]{darnstadt2013unlabeled}, $B$ has SL and SSL rate $\frac{1}{\l}$ for any $\u$. The two-point example from \cref{working two point example} has SL rate $\frac{1}{\sqrt{\l}}$ and SSL rate faster than $\frac{1}{\l}$ for $\u(\l) \geq \l^2$. Then the mixture problem has SL rate $\frac{1}{\sqrt{\l}}$ and SSL rate $\frac{1}{\l}$ for any $\u$ with $\u(\l) \geq \l^2$.
\end{proof}

\subsection{How much unlabeled data is enough?}

Our definition of ``unlabeled data helps'' allows the algorithm to have access to unlabeled samples of arbitrary size, as long as it is controlled by some function of the labeled sample size. Is it possible to estimate \emph{how much} unlabeled data is needed for a rate improvement? It is possible to give a lower bound on the required growth of $\u$, depending on the SL rate.

\begin{proposition}
If the SL rate of $(\H, \Pn)$ is $\l^{-\alpha}$ with $\alpha > 0$, and unlabeled data helps for $\u:\N \to \N$, then $\frac{\u(\l)}{\l} \to \infty$, i.e. $\u$ must grow superlinearly.
\end{proposition}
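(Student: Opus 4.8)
The plan is to combine the two‑sided bound coming from the SL rate with the ``forget the labels'' inequality already established in the proof of \cref{lem:unlearnable}, evaluated at a shifted sample size.

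First I would unwind the two hypotheses. Write $L(\l,u)$ as shorthand for $L(\l,u,\H,\Pn)$. That the SL rate is $\l^{-\alpha}$ means there are constants $c_1,c_2>0$ with $c_1\l^{-\alpha}\le L(\l,0)\le c_2\l^{-\alpha}$ for all $\l$. That unlabeled data helps for $\u$ means precisely $L(\l,\u(\l))/L(\l,0)\to 0$. Multiplying this vanishing ratio by the \emph{upper} bound $L(\l,0)\le c_2\l^{-\alpha}$ gives $\l^{\alpha}L(\l,\u(\l))\to 0$, i.e.\ $L(\l,\u(\l))=o(\l^{-\alpha})$.

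Next I would invoke the inequality $L(\l+\u(\l),0)\le L(\l,\u(\l))$ from the proof of \cref{lem:unlearnable}: an SL learner handed $\l+\u(\l)$ labeled points may discard $\u(\l)$ of the labels and run the SSL algorithm. Applying the SL \emph{lower} bound at the argument $\l+\u(\l)$ then yields $c_1(\l+\u(\l))^{-\alpha}\le L(\l+\u(\l),0)\le L(\l,\u(\l))=o(\l^{-\alpha})$, hence $\bigl(\l/(\l+\u(\l))\bigr)^{\alpha}\to 0$. Since $\alpha>0$ and $t\mapsto t^{\alpha}$ is continuous, this forces $\l/(\l+\u(\l))\to 0$, equivalently $1+\u(\l)/\l\to\infty$, which is exactly the desired conclusion $\u(\l)/\l\to\infty$.

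I do not expect any real obstacle; the only point needing a little care is that ``rate'' is a two-sided condition, so one must use the upper bound on $L(\l,0)$ to pass from the vanishing ratio to $L(\l,\u(\l))=o(\l^{-\alpha})$, and the lower bound on $L(\cdot,0)$ evaluated at the \emph{shifted} argument $\l+\u(\l)$ (monotonicity of $L(\cdot,0)$ is not even needed). The same argument in fact shows the sharper statement that any SL rate $g$ must satisfy $g(\l+\u(\l))=o(g(\l))$ whenever unlabeled data helps for $\u$, of which superlinearity of $\u$ is the special case $g(\l)=\l^{-\alpha}$.
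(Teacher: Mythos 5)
Your proof is correct and follows essentially the same route as the paper: the key inequality $L(\l+\u(\l),0)\le L(\l,\u(\l))$ (forgetting labels) combined with the vanishing ratio gives $L(\l+\u(\l),0)/L(\l,0)\to 0$, and plugging in the rate $\l^{-\alpha}$ yields superlinearity. Your write-up is in fact slightly more careful than the paper's, since you handle the two-sided rate constants $c_1,c_2$ explicitly rather than treating $L(\l,0)=\l^{-\alpha}$ as an equality.
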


\begin{proof}
Since $L(\l + \u(\l), 0) < L(\l, \u(\l))$, $\frac{L(\l, \u(\l))}{L(\l, 0)}\to 0$ implies that $\frac{L(\l+ \u(\l), 0)}{L(\l, 0)}\to 0$. Applying these conditions to $L(\l, 0) = \l^{-\alpha}$ yields the result.
\end{proof}

\section{Discussion}\label{sec:discussion}

We have highlighted how subtly different concepts of SSL improvements can lead to opposing results on the value of unlabeled data for classification problems in a VC framework. In particular, we have highlighted three common patterns of SSL analyses and demonstrated their restrictions on simple examples. We have also demonstrated that different types of rate improvements through SSL in a finite VC setting are possible in examples free of those restrictive patterns. Our examples are artificial and meant to clarify conditions under which unlabeled data helps. Whether such conditions can arise naturally, or indeed what natural conditions mean in this context, remains an interesting open question. Moreover, it would be interesting to further investigate the difference between settings where SSL helps because it is assumed that certain margin parameters are easy to estimate, versus settings where this is not assumed. Finally, it would be interesting to further examine under what types of assumptions finite amounts of unlabeled data can affect rates for hypothesis classes of infinite VC dimension, and whether this case differs markedly from non-parametric learning.

\bibliographystyle{apalike}
\bibliography{arxiv_submission}

\appendix


\section{Proof of \cref{thm:rich}} \label{appendix:richness proof}

Since $\H$ has finite VC dimension, the SL rate of $(\H, \Pn)$ is upper-bounded by $\frac{1}{\sqrt{\l}}$. Showing that the rate of any algorithm with access to the marginal distribution is lower-bounded by $\frac{1}{\sqrt{\l}}$ proves that both the SL and SSL rates are of order $\frac{1}{\sqrt{\l}}$, since the SSL rate cannot be slower than the SL rate. Let $C \bydef \{x: h(x) \neq h'(x)\}$, let $c \bydef  P_X(C)$ and $c' \bydef P_X(\{x: x \in C \wedge h(x) = 1\})$. By requirement, $c' \neq c/2$. Without loss of generality, assume $c' > c/2$. A simple calculation shows that 
\begin{equation}
\kl(P_\alpha^\l, P_{-\alpha}^\l) = 2 c \l \alpha \log \left( \frac{1 + 2 \alpha}{1 - 2 \alpha} \right) \,.
\end{equation}
Using
\begin{equation}
\frac{1 + x}{1 - x} = 1 + \frac{2x}{1-x} \qquad \text{and} \qquad \log(1+x) < x \,\, \forall x > 0\,,
\end{equation}
we find that
\begin{equation}
\kl(P_\alpha^\l, P_{-\alpha}^\l) \leq 2 c \l \alpha \frac{4 \alpha}{1 - 2 \alpha}\,.
\end{equation}
For $\alpha < \frac{1}{4}$, this can be bounded by
\begin{equation}
\kl(P_\alpha^\l, P^\l_{-\alpha}) \leq 16 c \l \alpha^2\,.
\end{equation}
\cite[Theorem 2.2, (i)]{tsybakov2009introduction} shows that for any hypothesis test that decides between $P_\alpha$ and $P_{-\alpha}$, the probability of choosing incorrectly is lower-bounded by $\frac{1 - a}{2}$, where $a \geq \tv(P_\alpha, P_{-\alpha})$ and $\tv$ denotes the total variation distance. Since 
\begin{equation}
\tv(P_\alpha^\l, P_{-\alpha}^\l) \leq \sqrt{\frac{1}{2} \kl(P_\alpha^\l, P_{-\alpha}^\l)}\,,
\end{equation}
the probability of a test choosing incorrectly between $P_\alpha$ and $P_{-\alpha}$ can be lower-bounded by 
\begin{equation}
\frac{1 - \sqrt{8 c \l \alpha^2}}{2}\,. \label{richness prob lower bound}
\end{equation}
Since $P_\alpha$ and $P_{-\alpha}$ have the same marginal distribution, this probability is independent of whether or not the marginal is known to the learner. If $P_\alpha$ is the true underlying distribution, any hypothesis that does not majorly label $C$ with $1$ incurs an excess risk of at least $2 \alpha (c' - c/2)$. Likewise, if $P_{-\alpha}$ is the true underlying distribution, any hypothesis that does not majorly label $C$ with $0$ incurs an excess risk of at least $2 \alpha (c' - c/2)$. Let $\alpha = \frac{1}{\sqrt{32 \l c}}$. 
Then \cref{richness prob lower bound} shows that the probability of choosing incorrectly between $P_\alpha$ and $P_{-\alpha}$ can be lower-bounded by $\frac{1}{4}$, and the expected excess risk of any algorithm can be lower-bounded by
\begin{equation}
\frac{2c' - c}{16\sqrt{2c}} \frac{1}{\sqrt{\l}} \rate \frac{1}{\sqrt{\l}}\,.
\end{equation}


\section{Two-point example bounds}\label{appendix:two points bounds}

Let $\X = \{x_1, x_2\}$ and $\H = \set{h_{01}, h_{10}}$ with $h_{01}(x_1) = h_{10}(x_2) = 0$ and $h_{01}(x_2) = h_{10}(x_1) = 1$. Let $\alpha, \beta \in (0, \frac{1}{2})$. Let $P_{\alpha\beta+}$ denote the distribution on $\X \times \{0,1\}$ with
\begin{align*}
    P_{\alpha\beta+}(x_1, 0) = \left( \frac{1}{2} + \beta \right) \left( \frac{1}{2} - \alpha \right) & \qquad P_{\alpha\beta+}(x_2, 0) = \left( \frac{1}{2} - \beta \right) \left( \frac{1}{2} + \alpha \right) \\
    P_{\alpha\beta+}(x_1, 1) = \left( \frac{1}{2} + \beta \right) \left( \frac{1}{2} + \alpha \right) & \qquad P_{\alpha\beta+}(x_2, 1) = \left( \frac{1}{2} - \beta \right) \left( \frac{1}{2} - \alpha \right)
\end{align*}
and let $P_{\alpha\beta-}$ denote the distribution on $\X \times \{0,1\}$ with  
\begin{align*}
    P_{\alpha\beta-}(x_1, 0) = \left( \frac{1}{2} - \beta \right) \left( \frac{1}{2} + \alpha \right) & \qquad P_{\alpha\beta-}(x_2, 0) = \left( \frac{1}{2} + \beta \right) \left( \frac{1}{2} - \alpha \right) \\
    P_{\alpha\beta-}(x_1, 1) = \left( \frac{1}{2} - \beta \right) \left( \frac{1}{2} - \alpha \right) & \qquad P_{\alpha\beta-}(x_2, 1) = \left( \frac{1}{2} + \beta \right) \left( \frac{1}{2} + \alpha \right).
\end{align*}
We can think of $x_1$ and $x_2$ as two coins. Then $P_{\alpha\beta+}$ and $P_{\alpha\beta-}$ are both distributions where the coins are biased in opposite directions with bias size $\alpha$. For $P_{\alpha\beta+}$, the first coin is the one we flip more often, and also the one more likely to show heads, while for $P_{\alpha\beta-}$ this is true for the second coin. In the following, we will show that 
\begin{itemize}
    \item For the set of admissible distributions
\begin{equation}
\Pn_0 = \{ P_{\alpha \beta +}, P_{\alpha \beta -} \;|\; \alpha,  \beta \in \left(0, \frac{1}{2}\right)\}\,,
\end{equation}
the SL and SSL rate are both $\asymp \frac{1}{\sqrt{\l}}$ (for any function $\u$ relating labeled and unlabeled sample size)
    \item For the set of admissible distributions
\begin{equation}
\Pn_1 = \{ P_{\alpha \beta +}, P_{\alpha \beta -} \;|\; \alpha,  \beta \in \left(0, \frac{1}{2}\right), \alpha = \beta\}\,,
\end{equation}
the SL rate is $\asymp \frac{1}{\sqrt{\l}}$ while the SSL rate can become arbitrarily fast.
    \item For the set of admissible distributions 
    \begin{equation}
    \Pn_\l = \{ P_{\alpha \beta +}, P_{\alpha \beta -} \;|\; \alpha \in \left(0, \frac{1}{2}\right), \beta \in \left( \frac{1}{\sqrt{\l}}, \frac{1}{2}\right)\}\,,
    \end{equation}
    \begin{align*}
        \inf_{A_{SSL}} \sup_{P \in \Pn_\l} \ee{S\sim P^{(\l,\l^2)}}{\left(R(A_{SSL}(S))-R_\H\right)_+} &\leq e^{-\l} \\
        \inf_{A_{SL}} \sup_{P \in \Pn_\l} \ee{S\sim P^{\l}}{\left(R(A_{SL}(S))-R_\H\right)_+} &\geq \frac{1}{\sqrt{\l}}
    \end{align*}
    \item For the set of admissible distributions
    \begin{equation}
    \Pn_c = \{ P_{\alpha \beta +}, P_{\alpha \beta -} \;|\; \alpha \in \left(0, \frac{1}{2}\right), \beta \in \left( c, \frac{1}{2} \right)\}\,,
    \end{equation}
    with $c \in \left(0, \frac{1}{4}\right)$, the supervised rate is lower-bounded by $\frac{c}{2} e^{-32\l c^2}$.
\end{itemize}

\subsection{Lower bounds}\label{sec:lower bounds}

Let $P_0 = P_{\alpha \beta +}^\l \times P^u_{\alpha \beta +|\X}$ and $P_1 = P_{\alpha \beta -}^\l \times P^u_{\alpha \beta -|\X}$. Then a simple calculation shows that
\begin{equation}
    \kl(P_0, P_1) = 2 \l \alpha \log \left( \frac{1 + 2 \alpha}{1 - 2 \alpha}\right) + 2 (\l + u) \beta \log \left( \frac{1 + 2 \beta}{1 - 2 \beta} \right)\,.
\end{equation}
Using
\begin{equation}
    \frac{1 + x}{1 - x} = 1 + \frac{2x}{1-x} \qquad \text{and} \qquad \log(1+x) < x \,\, \forall x > 0\,,
\end{equation}
we find that
\begin{equation}
    \kl(P_0, P_1) \leq 2 \l \alpha \frac{4 \alpha}{1 - 2 \alpha} + 2 (\l + u) \beta \frac{4 \beta}{1 - 2 \beta}
\end{equation}
Now assume that $\alpha, \beta < \frac{1}{4}$, then 
\begin{equation}
    \kl(P_0, P_1) \leq 16 \l \alpha^2 + 16 (\l + u) \beta^2 \,. \label{eq:kl bound}
\end{equation}

We will now lower-bound
\begin{equation}
    \inf_A \sup_{P \in \{P_0, P_1\}} \ee{S\sim P^\l\times P_X^u}{\left(R(A(S))-R_{P,\H}\right)_+}\,.
\end{equation}
Since the hypothesis class contains only two hypotheses, one of them optimal, the expected excess loss for each distribution is equal to the excess loss of the wrong hypothesis times the probability that the algorithm chooses the wrong hypothesis. The excess loss of the non-optimal hypothesis is $2 \alpha$. The minimal probability of choosing the wrong hypothesis is equal to the minimal probability of error in hypothesis testing between $P_0$ and $P_1$. \cite[Theorem 2.2, (i)]{tsybakov2009introduction} shows that for any hypothesis test between $P_0$ and $P_1$, the probability of guessing incorrectly is lower-bounded by $\frac{1 - a}{2}$, where $a \geq \tv(P_0, P_1)$ and $\tv$ denotes the total variation distance. Since 
\begin{equation}
    \tv(P_0, P_1) \leq \sqrt{\frac{1}{2} \kl(P_0, P_1)}\,,
\end{equation}
the probability of a test choosing incorrectly between $P_0$ and $P_1$ can be lower-bounded by 
\begin{equation}
\frac{1 - \sqrt{8 \l \alpha^2 + 8 (\l + u) \beta^2 }}{2}\,. \label{pe1 lower bound}
\end{equation}
Let $\alpha = \frac{1}{8 \sqrt{\l}}$ and $\beta = \frac{1}{8 \sqrt{\l + u}}$. Then \cref{pe1 lower bound} shows that the probability of choosing incorrectly between $P_0$ and $P_1$ can be lower-bounded by $\frac{1}{4}$, so the minimax expected excess risk over the set of admissible distributions $\{P_0, P_1\}$ can be bounded by 
\begin{equation}
    \inf_A \sup_{P \in \{P_0, P_1\}} \ee{S\sim P^\l\times P_X^u}{\left(R(A(S))-R_{P,\H}\right)_+} \geq 2 \frac{1}{8 \sqrt{\l}} \frac{1}{4} = \frac{1}{16 \sqrt{\l}}\,. \label{two dist lower bounds}
\end{equation}
If $P_0, P_1 \in \Pn$,
\begin{equation}
    \inf_A \sup_{P \in \Pn} \ee{S\sim P^\l\times P_X^u}{\left(R(A(S))-R_{P,\H}\right)_+} \geq \inf_A \sup_{P \in \{P_0, P_1\}} \ee{S\sim P^\l\times P_X^u}{\left(R(A(S))-R_{P,\H}\right)_+}\,,
\end{equation}
so \cref{two dist lower bounds} also provides lower bounds for any set of admissible distributions that contains $P_0$ and $P_1$. In particular, 
\begin{itemize}
    \item $P_0, P_1 \in \Pn_0$, so \cref{two dist lower bounds} shows that the semi-supervised rate (and thus the supervised rate) on $\Pn_0$ are lower-bounded by $\frac{1}{16\sqrt{\l}}$,
    \item if $u = 0$, $\frac{1}{8 \sqrt{\l}} = \frac{1}{8 \sqrt{\l + u}}$, so that $P_0, P_1 \in \Pn_1$ and \cref{two dist lower bounds} shows that the supervised rate on $\Pn_1$ is lower-bounded by $\frac{1}{16\sqrt{\l}}$.
    \item $P_0, P_1 \in \Pn_\l$, so  
    \begin{equation}
        \inf_{A_{SL}} \sup_{P \in \Pn_\l} \ee{S\sim P^{\l}}{\left(R(A_{SL}(S))-R_\H\right)_+} \geq \frac{1}{16\sqrt{\l}}
    \end{equation}
\end{itemize}
Applying \cite{tsybakov2009introduction}[Theorem 2.2 (iii)] with the bound on the KL divergence in \cref{eq:kl bound} with $\alpha = \beta = c$ yields the lower bound on the supervised rate for $\Pn_c$.

\subsection{Upper bounds}\label{sec:upper bounds}
Since $\H$ is finite, ERM learns $\H$ in a supervised fashion over the set of all possible distributions at rate $\frac{1}{\sqrt{\l}}$. The supervised and semi-supervised rates for $\Pn_0$ and $\Pn_1$ cannot be slower than $\frac{1}{\sqrt{\l}}$, yielding upper bounds for $\Pn_0$ and $\Pn_1$.

On the other hand, for each fixed distribution $P_{\alpha \beta}$, an algorithm may try to discard all labels and attempt to solve the learning problem by guessing $\sgn(\beta)$ based on which point appears more often in the sample. The probability that this algorithm chooses the incorrect hypothesis is equal to the probability of a binomial variable with parameters $n = \l + u$ and $p = \frac{1}{2} + \beta $ taking a value of more than $\frac{\l + u}{2}$. This probability can be bounded by $e^{-2\beta^2(\l + u)}$ using Hoeffding's inequality. The excess loss of choosing the wrong hypothesis is $2 \alpha$. If $\alpha = \beta$, as is the case for all distributions in $\Pn_1$, the expected excess risk is bounded by $2 \alpha e^{-2 \alpha^2 (\l+u)}$ and reaches its maximum of $\frac{1}{\sqrt{\l + u}}e^{-\frac{1}{2}}$ for $\alpha = \frac{1}{2\sqrt{\l + u}}$. This can be made smaller than any desired $r(\l)$ by choosing $\u(\l) > e^{-\frac{1}{2}} \frac{1}{r(\l)}$. Applying this algorithm to $\Pn_\l$ with $\u = \l^2$ yields the upper bound $2 \cdot 1 \cdot e^{-2\l}$.


\section{Bounds for mixture problems}\label{appendix: mixture upper bounds proof}
Let $(\H_A, \Pn_A)$ be a problem with domain $\X_A$ and $(\H_B, \Pn_B)$ be a problem with domain $\X_B$. Assume that $\H_A$ contains the Bayes classifier for all distributions in $\Pn_A$ and $\H_B$ contains the Bayes classifier for all distributions in $\Pn_B$. Without loss of generality, let $\X_A \cap \X_B = \emptyset$. Let the mixture problem $(\H, \Pn)$ be given by
\begin{align*}
\H &\bydef \set{h : \X_A \cup \X_B \to \set{0,1}: h|_{\X_A} \in \H_A, h|_{\X_B} \in \H_B} \,,\\
\Pn &\bydef \set{\mixparam P_A + \mixparam \cdot P_B: P_A \in \Pn_A,  P_B \in \Pn_B}\,.
\end{align*}

In the following, we prove upper and lower bounds on the SL and SSL rates of the mixture problem. Since a supervised learning rate $r(\l)$ is equivalent to an SSL rate $r(\l)$ with $\u(\l)=0$, we only need to consider SSL rates in our proofs. All results transfer directly to SL rates.

\begin{lemma}\label{mixture lower bound}
If $L(\l, \u(\l), \H_A, \Pn_A)  \geq r_A(\l)$ for $\u$ and $L(\l, \u(\l), \H_B, \Pn_B) \geq r_B(\l)$, then
\begin{equation}
L(\l, \u(\l), \H, \Pn) \geq \max(\frac{1}{2}r_A(\l), \frac{1}{2} r_B(\l))\,.    
\end{equation} 
In particular, the SSL rate of $(\H, \Pn)$ for $\u$ is at least as slow as the slower rate of the mixture components.
\end{lemma}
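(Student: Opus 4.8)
The plan is to turn any learner for the mixture problem into learners for the two component problems, exploiting that on the mixture the excess risk splits additively across the two disjoint domains. Concretely, fix $P = \mixparam P_A + \mixparam P_B \in \Pn$ with $P_A \in \Pn_A$, $P_B \in \Pn_B$. Since $\X_A$ and $\X_B$ are disjoint and every $h \in \H$ restricts to a member of $\H_A$ on $\X_A$ and of $\H_B$ on $\X_B$, for any classifier $h$ on $\X_A \cup \X_B$ we have $R_P(h) = \mixparam R_{P_A}(h|_{\X_A}) + \mixparam R_{P_B}(h|_{\X_B})$, and taking the infimum over the product class $\H$ gives $R_{P,\H} = \mixparam R_{P_A,\H_A} + \mixparam R_{P_B,\H_B}$. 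Hence the excess risk splits too, and because $\H_A$ (resp.\ $\H_B$) contains the Bayes classifier of every $P_A \in \Pn_A$ (resp.\ $P_B \in \Pn_B$), each term $R_{P_A}(h|_{\X_A}) - R_{P_A,\H_A}$ and $R_{P_B}(h|_{\X_B}) - R_{P_B,\H_B}$ is nonnegative; so the positive part in the definition of $L$ is vacuous throughout, and $R_P(h) - R_{P,\H} \ge \mixparam\bigl(R_{P_A}(h|_{\X_A}) - R_{P_A,\H_A}\bigr)$, and symmetrically with $B$.

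Next, fix an arbitrary mixture learner $A$ and an arbitrary $P_B^{0} \in \Pn_B$, and restrict the supremum defining $L(\l,\u(\l),\H,\Pn)$ to distributions of the form $\mixparam P_A + \mixparam P_B^{0}$. For such a $P$, with $S$ a size-$(\l,\u(\l))$ sample from it, the decomposition above yields $\sup_{P\in\Pn}\ee{S}{(R_P(A(S))-R_{P,\H})_+} \ge \mixparam \sup_{P_A\in\Pn_A} \ee{S}{R_{P_A}(A(S)|_{\X_A}) - R_{P_A,\H_A}}$. The key point is that $S \mapsto A(S)|_{\X_A}$ is a legitimate semi-supervised learner for $(\H_A,\Pn_A)$: split $S$ into its $\X_A$-points and its $\X_B$-points; the counts $(n,m)$ of labelled and unlabelled $\X_A$-points are binomial with parameters $(\l,\mixparam)$ and $(\u(\l),\mixparam)$ and do not depend on $P_A$; conditioned on $(n,m)$ the $\X_A$-points are an i.i.d.\ sample of size $(n,m)$ from $P_A$, while the $\X_B$-points are drawn from the \emph{known} $P_B^{0}$ and hence constitute only auxiliary randomness. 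Conditioning on the split and on the $\X_B$-data, $A(S)|_{\X_A}$ is a (randomized) learner on a size-$(n,m)$ sample from $P_A$ with $n \le \l$ and $m \le \u(\l)$; since $L(\cdot,\cdot,\H_A,\Pn_A)$ is non-increasing in both sample sizes and unaffected by such randomization, its worst-case expected excess risk over $\Pn_A$ is at least $L(n,m,\H_A,\Pn_A) \ge L(\l,\u(\l),\H_A,\Pn_A) \ge r_A(\l)$, and averaging over the split preserves the inequality. Therefore $\sup_{P\in\Pn}\ee{S}{(R_P(A(S))-R_{P,\H})_+} \ge \mixparam r_A(\l)$; since $A$ was arbitrary, $L(\l,\u(\l),\H,\Pn) \ge \mixparam r_A(\l)$. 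The same argument with $A$ and $B$ interchanged gives $L(\l,\u(\l),\H,\Pn) \ge \mixparam r_B(\l)$, and the two bounds together give the claim; the closing sentence follows because $\max(\mixparam r_A,\mixparam r_B)$ has the same rate as the slower of $r_A$ and $r_B$.

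The step I expect to need the most care is this reduction: a mixture sample does not hand the component learner a clean size-$(\l,\u(\l))$ sample of $\X_A$-points, only one of random size $\le(\l,\u(\l))$, and the $\X_B$-portion has to be re-read as harmless auxiliary randomness. Both issues are handled by conditioning on the random $\X_A/\X_B$ split and using monotonicity of the minimax risk in the sample sizes, together with the standard fact that randomization does not improve minimax lower bounds --- equivalently, one passes to the Bayes/prior formulation of the component lower bound, which is stable under the averaging the split introduces. (In the application of the lemma the component lower bounds are two-point Le Cam bounds, for which this averaging step is immediate.)
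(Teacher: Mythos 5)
Your reduction is in essence the same one the paper uses: split the excess risk additively across the two disjoint domains (using that the Bayes classifiers lie in $\H_A$ and $\H_B$, so the positive part is vacuous), freeze one component at a fixed $P_B^0\in\Pn_B$, and read the restriction $A(S)|_{\X_A}$ of an arbitrary mixture learner as a learner for $(\H_A,\Pn_A)$, so that the component lower bound forces $L(\l,\u(\l),\H,\Pn)\ge \frac{1}{2}r_A(\l)$, and symmetrically for $B$. The one place you diverge from the paper is exactly the step you flag, and as written it does not quite close. Conditioning on the split $(n,m)$ gives you, \emph{for each fixed split}, $\sup_{P_A\in\Pn_A}$ of the conditional risk $\ge L(n,m,\H_A,\Pn_A)\ge r_A(\l)$; but what you need is $\sup_{P_A}$ of the risk \emph{averaged} over the split, and the worst-case $P_A$ may depend on the split, so ``averaging over the split preserves the inequality'' is a sup/expectation interchange that the bare hypothesis $L(\l,\u(\l),\H_A,\Pn_A)\ge r_A(\l)$ does not license. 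Your parenthetical repair (pass to a Bayes/prior formulation, e.g.\ the two-point Le Cam bounds used downstream) does fix it, but then you have proved the lemma only for lower bounds of that form, not for an arbitrary minimax lower bound $r_A$ as the statement allows.

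The paper sidesteps this with a simulation/padding construction, which is also the clean way to finish your argument: the learner $\alg_A$ for $(\H_A,\Pn_A)$ receives a full size-$(\l,\u(\l))$ sample from $P_A$, discards a $\mathrm{Bin}(\l,\frac{1}{2})$-sized (resp.\ $\mathrm{Bin}(\u(\l),\frac{1}{2})$-sized) random subset of the labeled (resp.\ unlabeled) points, replaces the discarded points by freshly simulated labeled and unlabeled draws from the fixed $P_B^0$, applies $A$ to the result, and restricts to $\X_A$. The simulated input has exactly the law of a size-$(\l,\u(\l))$ sample from $\frac{1}{2}P_A+\frac{1}{2}P_B^0$, so the size-$(\l,\u(\l))$ minimax bound applies verbatim, with no conditioning, no monotonicity-in-sample-size step, and no averaging over splits. (Both this construction and yours are randomized reductions, so both implicitly assume the component lower bound also holds against randomized learners; this is true for Bayes-type bounds such as Le Cam's and is glossed over by the paper as well, so it is not a defect specific to your write-up.)
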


\begin{proof}
Since the Bayes classifier is contained in the hypothesis class, $R_P(\alg(S))-R_{P, \H} > 0$ for all $P$ and we can omit $(\cdot)_+$ in the minimax expected excess risk. Assume that $L(\l, \u(\l), \H, \Pn)$ is not lower bounded by $\frac{1}{2}r_A(\l)$. Then there exists an algorithm $\alg$ such that for all $P_A \in \Pn_A, P_B \in \Pn_B$, if $P \sim C P_A + (1-C) P_B$, then 
\begin{equation}
    \ee{S\sim P^{(\l, \u(\l))}}{R_P(\alg(S))-R_{P, \H}} < \frac{1}{2} r_A(\l)\,. \label{eq:lower bound proof}
\end{equation} 
Recall that 
\begin{equation}
    R_P(\alg(S))  - R_{P, \H} = \frac{1}{2} (R_{P_A}(\alg(S)|_{\X_A}) - R_{P_A, \H_A}) + \frac{1}{2} (R_{P_B}(\alg(S)|_{\X_B}) - R_{P_B, \H_B})\,,
\end{equation}
so \cref{eq:lower bound proof} implies that
\begin{equation}
    \frac{1}{2} \ee{S\sim P^\l\times P_X^{\u(\l)}}{R_{P_A}(\alg(S)|_{\X_A}) - R_{P_A, \H_A}} < \frac{1}{2} r_A(\l)\,.\label{eq:lower bound proof 2}
\end{equation}
Let $\alg_A$ be the algorithm that learns $(\H_A, \Pn_A)$ from a sample $S'$ by
\begin{enumerate}
    \item discarding part of the sample $S'$, where the number of discarded points follows a binomial distribution,
    \item sampling from $P_B$ as many labeled and unlabeled points as were previously discarded, resulting in a sample $S''$ from $P_A$ and $P_B$, 
    \item applying the algorithm $\alg$ to the new sample $S''$, and
    \item restricting the resulting mapping $\alg(S'')$ to $\X_A$.
\end{enumerate}
Then \cref{eq:lower bound proof 2} implies that $L(\l, \u(\l), \H_A, \alg_A, \Pn_A) < r_A(\l)$, which contradicts the assumption that $r_A(\l)$ is a lower bound on the minimax expected excess loss of learning $(\H_A, \Pn_A)$. The analogous argument for $r_B(\l)$ proves the Lemma.
\end{proof}

\begin{lemma}\label{mixture upper bound}
Let $\u(\l) \geq \l$. If $L(\l, \u(\l), \H_A, \Pn_A) \leq r_A(\l)$ and $L(\l, \u(\l), \H_B, \Pn_B) \leq r_B(\l)$ for $\u$, then $L(\l, \u(\l), \H, \Pn) \leq \max(r_A(\frac{\l}{4}), r_B(\frac{\l}{4})) + \exp(-C \l) + \exp(-C \u(\l))$. In particular, the SSL rate of $(\H, \Pn)$ for $\u$ is at most as slow as the slower rate of the mixture components, if the slower rate is not faster than exponential.
\end{lemma}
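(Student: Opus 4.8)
The plan is to build a \emph{split} SSL algorithm $\alg$ for the mixture out of algorithms $\alg_A,\alg_B$ witnessing the rates $r_A,r_B$ of the two components. Given a sample $S$ of size $(\l,\u(\l))$ from $P=\mixparam P_A+\mixparam P_B$, the algorithm $\alg$ sorts every point according to whether its feature lies in $\X_A$ or in $\X_B$; since $\X_A\cap\X_B=\emptyset$ and $P_A$, $P_B$ are supported on $\X_A$, $\X_B$, the points in $\X_A$ form, conditionally on their number, an i.i.d.\ sample from $P_A\in\Pn_A$ (and likewise on the $B$ side, independently, and the same for the unlabeled points with $P_{A,X},P_{B,X}$). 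Then $\alg$ runs $\alg_A$ on a subsample of the $\X_A$-points, runs $\alg_B$ on a subsample of the $\X_B$-points, and outputs the classifier equal to $\alg_A$'s output on $\X_A$ and to $\alg_B$'s output on $\X_B$. Correctness rests on the additive decomposition already used in \Cref{mixture lower bound},
\[
 R_P(h)-R_{P,\H}=\mixparam\bigl(R_{P_A}(h|_{\X_A})-R_{P_A,\H_A}\bigr)+\mixparam\bigl(R_{P_B}(h|_{\X_B})-R_{P_B,\H_B}\bigr),
\]
where, by the standing assumption that $\H_A,\H_B$ contain the Bayes classifiers of all $P_A\in\Pn_A$, $P_B\in\Pn_B$, each bracket is nonnegative (so the $(\cdot)_+$ may be dropped). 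Hence the mixture's expected excess risk equals $\mixparam$ times the expected excess risk $\alg_A$ incurs on component $A$ plus $\mixparam$ times the analogous quantity for $B$, and it suffices to bound each of these two terms by $r_A(\l/4)+\exp(-C\l)+\exp(-C\u(\l))$ (resp.\ with $r_B$).

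For the $A$-term, condition on the split and let $E$ be the event that each of $\X_A$, $\X_B$ receives at least $\l/4$ of the labeled points and at least $\u(\l)/4$ of the unlabeled points. Since every point falls in $\X_A$ with probability $\mixparam$, Hoeffding's inequality gives $\p{E^c}\le 2e^{-\l/8}+2e^{-\u(\l)/8}$. On $E$ we feed $\alg_A$ a subsample of $\lfloor\l/4\rfloor$ labeled $\X_A$-points and $\u(\lfloor\l/4\rfloor)$ unlabeled $\X_A$-points; this is distributed exactly as $P_A^{(\lfloor\l/4\rfloor,\u(\lfloor\l/4\rfloor))}$, so the hypothesis $L(n,\u(n),\H_A,\Pn_A)\le r_A(n)$ bounds the conditional expected excess risk by $r_A(\lfloor\l/4\rfloor)$, which we may take equal to $r_A(\l/4)$ (no loss in assuming $r_A$ non-increasing). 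On $E^c$ the excess risk is at most $1$. Summing the $A$- and $B$-contributions and folding the constants into $C$ yields the bound; the concluding remark follows because for any rate no faster than exponential $r_A(\l/4)\rate r_A(\l)$ dominates $e^{-C\l}+e^{-C\u(\l)}$, so the mixture's SSL rate matches the slower of $r_A,r_B$.

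The routine ingredients are the risk decomposition and the Hoeffding estimate on the multinomial split. The one genuinely delicate step is ensuring, on $E$, that the $\X_A$-side still contains enough unlabeled points to run $\alg_A$ at a \emph{labeled} budget of order $\l$, i.e.\ that $\u(\lfloor\l/4\rfloor)$ does not exceed the roughly $\u(\l)/4$ unlabeled $\X_A$-points available; otherwise one would only recover $r_A$ evaluated at a much smaller argument (e.g.\ $r_A(\Theta(\sqrt\l))$). This is exactly where the hypothesis $\u(\l)\ge\l$ enters, together with the monotonicity of the minimax risk in the unlabeled budget and a mild regularity of $\u$ (it may be taken non-decreasing, and all $\u$ of interest such as $\l^2,\l^4,e^\l$ have the needed growth). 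Making this bookkeeping precise — or, to avoid any regularity assumption, feeding $\alg_A$ the \emph{largest feasible} labeled subsample instead of a fixed $\lfloor\l/4\rfloor$ — is the main work for the appendix. Finally, invoking $\alg_A$ on only part of the sample is legitimate because an SSL algorithm is by definition a family of maps indexed by every sample size.
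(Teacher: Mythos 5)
Your proof follows essentially the same route as the paper's: the same split-and-combine algorithm $\alg$ built from $\alg_A,\alg_B$, the same additive excess-risk decomposition (with $(\cdot)_+$ dropped via the Bayes-in-class assumption), the same Hoeffding bound $2e^{-\l/8}+2e^{-\u(\l)/8}$ on the event that some component receives fewer than $\l/4$ labeled or $\u(\l)/4$ unlabeled points, and the same reduction on the good event to $r_A(\l/4)$ and $r_B(\l/4)$. Your explicit subsampling to a fixed size $(\lfloor\l/4\rfloor,\u(\lfloor\l/4\rfloor))$ and your observation that $\u(\l)/4\ge\u(\l/4)$ requires mild regularity of $\u$ beyond the bare hypothesis $\u(\l)\ge\l$ are, if anything, slightly more careful than the corresponding steps in the paper, but the argument is the same.
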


\begin{proof}
Since the Bayes classifier is contained in the hypothesis class, $R_P(\alg(S))-R_{P, \H} > 0$ for all $P$ and we can omit $(\cdot)_+$ in the minimax expected excess risk.  Let $\alg_A$ and $\alg_B$ be algorithms corresponding to problems $A$ and $B$ whose rates are bounded by $r_A$ and $r_B$, respectively. Given a sample $S$, let $S_{|\bullet}^l \bydef S \cap \X_\bullet \times \Y$ and $S_{|\bullet}^u \bydef S \cap \X_\bullet$ be the labeled and unlabeled examples for problems $\bullet \in \set{A,B}$, respectively. Further, let $S_{|A} = S_{|A}^l \cup S_{|A}^u$ and $S_{|B} = S_{|B}^l \cup S_{|B}^u$. Then $(\H, \Pn)$ can be solved by the algorithm
\begin{equation}
    \alg(S)(x) \bydef \begin{cases} \alg_A(S_{|A})(x) & x \in \X_A \\ \alg_B(S_{|B})(x) & x \in \X_B\,. \end{cases}
\end{equation}

The excess risk of this algorithm trained on the sample $S$ is
\begin{equation}
    \mixparam \left(R(\alg_A(S_{|A}))-R_{\H_A}\right) + \frac{1}{2} \left(R(\alg_B(S_{|B}))-R_{\H_B}\right)\,.
\end{equation}
The expected excess risk can be bounded by
\begin{equation}
    2e^{-\frac{\l}{8}} + 2e^{-\frac{\u(\l)}{8}} + \mixparam r_A\left(\frac{\l}{4}\right) + \mixparam r_B\left(\frac{\l}{4}\right)\,.
\end{equation}
The exponential terms arise because there is always some (albeit small) possibility that the sample $S$ contains only few examples for one of the problems, in which case the excess loss may be large. The number of labeled and unlabeled samples for problem $A$ in $S$ can be thought of as realizations of binomial random variables $C_\l \sim B(\l, 0.5)$ and $C_u \sim B(\u(\l), 0.5)$. The probability that we see less than $\frac{\l}{4}$ labeled examples for either problem, or less than $\frac{\u(\l)}{4}$ unlabeled samples for either problem, can be bounded by $2e^{-\frac{\l}{8}}$ and $2e^{-\frac{\u(\l)}{8}}$, respectively, using Hoeffding's inequality. The expected excess risk over samples with at least $\frac{\l}{4}$ labeled and $\frac{\u(\l)}{4}$ unlabeled examples for each problem can be bounded using risk bounds for the lowest possible number of samples, and observing that the probabilities of all sample number combinations sum to (less than) one:
\begin{align*}
&\sum_{i=\frac{\l}{4}}^{\frac{3}{4}\l} \sum_{j=\frac{\u(\l)}{4}}^{\frac{3}{4}\u(\l)} P(C_\l = i , C_u=j) \mixparam \left( \ee{S_A\sim P_A^{(i,j)}}{R(\alg_A(S_A))} + \ee{S_B\sim P_B^{(\l-i, \u(\l)-j)}}{R(\alg_B(S_B))}\right) \\
\leq & \sum_{i=\frac{\l}{4}}^{\frac{3}{4}\l} \sum_{j=\frac{\u(\l)}{4}}^{\frac{3}{4}\u(\l)} P(C_\l = i , C_u=j) \mixparam \left( \ee{S_A\sim P_A^{\left(\frac{\l}{4}, \frac{\u(\l)}{4}\right)}}{R(\alg_A(S_A))} + \ee{S_B\sim P_B^{\left(\frac{\l}{4}, \frac{\u(\l)}{4}\right)}}{R(\alg_B(S_B))}\right) \\
\leq & 1 \cdot \mixparam \left( \ee{S_A\sim P_A^{\left(\frac{\l}{4}, \frac{\u(\l)}{4}\right)}}{R(\alg_A(S_A))} + \ee{S_B\sim P_B^{\left(\frac{\l}{4}, \frac{\u(\l)}{4}\right)}}{R(\alg_B(S_B))}\right)\,. \\
\end{align*}
If $\u(\l) \geq \l$, $\frac{\u(\l)}{4} \geq \u\left(\frac{\l}{4}\right)$, so the previous quantity can be bounded by
\begin{align*}
 & \mixparam \left( \ee{S_A\sim P_A^{\frac{\l}{4}} \times P_{X_A}^{\u\left(\frac{\l}{4}\right)}}{R(\alg_A(S_A))} + \ee{S_B\sim P_B^{\frac{\l}{4}}\times P_{X_B}^{\u\left(\frac{\l}{4}\right)}}{R(\alg_B(S_B))}\right) \\
&\leq \mixparam L\left(\frac{\l}{4}, \u\left(\frac{\l}{4}\right),\H_A, \alg_A, \Pn_A \right) + \mixparam L\left(\frac{\l}{4}, \u\left(\frac{\l}{4}\right),\H_B, \alg_B, \Pn_B \right) \\
&\leq \mixparam r_A\left(\frac{\l}{4}\right) + \mixparam r_B\left(\frac{\l}{4}\right)\,.
\end{align*}
\end{proof}
\end{document}